\title{Interpretable transformed ANOVA approximation on the example of the prevention of forest fires}
\author{ \href{https://orcid.org/0000-0003-3651-4364}{\includegraphics[scale=0.06]{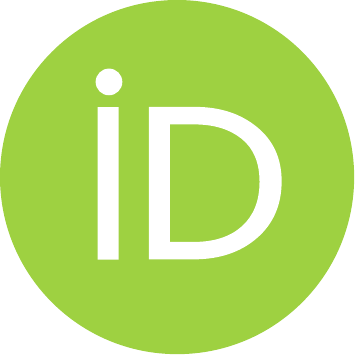}\hspace{1mm}Daniel Potts} \\
	Faculty of Mathematics\\
	Chemnitz University of Technology\\
	09107 Chemnitz \\
	\texttt{potts@math.tu-chemnitz.de} \\
	\And
	\href{https://orcid.org/0000-0003-1152-0864}{\includegraphics[scale=0.06]{orcid.pdf}\hspace{1mm}Michael Schmischke} \\
	Faculty of Mathematics\\
	Chemnitz University of Technology\\
	09107 Chemnitz \\
	\texttt{potts@math.tu-chemnitz.de} \\
}
\pgfplotsset{compat=1.13}
\DeclareMathOperator*{\argmin}{arg\,min}
\DeclareMathOperator*{\supp}{supp}
\newcommand{\fun}[3]{#1 \colon #2 \rightarrow #3}
\newcommand{\abs}[1]{\left|#1\right|}
\newcommand{\norm}[2]{\left\| #1 \right\|_{#2}}
\newcommand{\transp}{^\mathsf T}
\renewcommand{\sp}[2]{\langle #1, #2 \rangle}
\renewcommand{\subset}{\subseteq}
\renewcommand{\epsilon}{\varepsilon}
\renewcommand{\b}{\bm}
\renewcommand{\cref}{\Cref}
\newcommand{\eps}{\mathrm\varepsilon}
\newcommand{\e}{\mathrm e}
\renewcommand{\i}{\mathrm i}
\newcommand{\N}{\mathbb N}
\newcommand{\R}{\mathbb R}
\newcommand{\Z}{\mathbb Z}
\newcommand{\X}{\mathcal{X}}
\newcommand{\I}{\mathcal{I}}
\newcommand{\x}{\b x}
\newcommand{\y}{\b y}
\renewcommand{\k}{\b k}
\renewcommand{\d}{\,\mathrm{d}}
\renewcommand{\L}{\mathrm{L}}
\newcommand{\D}{[d]}
\renewcommand{\u}{\b u}
\newcommand{\uc}{\b{u}^{\mathrm c}}
\newcommand{\au}{\abs{\u}}
\renewcommand{\v}{\bm v}
\newcommand{\av}{\abs{\v}}
\newcommand{\va}[1]{\sigma^2(#1)} 
\newcommand{\gsi}[2]{\varrho(#1,#2)}
\newcommand{\fc}[2]{\mathrm{c}_{#1}\!\left(#2\right)}
\newtheorem{theorem}{Theorem}[section]
\newtheorem{lemma}[theorem]{Lemma}
\newtheorem{remark}[theorem]{Remark}
\newtheorem{definition}[theorem]{Definition}
\newtheorem{example}[theorem]{Example}
\newtheorem{corollary}[theorem]{Corollary}
\begin{document}
\maketitle

\begin{abstract}
	The distribution of data points is a key component in machine learning. In most cases, one uses min-max normalization to obtain nodes in $[0,1]$ or Z-score normalization for standard normal distributed data. In this paper, we apply transformation ideas in order to design a complete orthonormal system in the $\mathrm{L}_2$ space of functions with the standard normal distribution as integration weight. Subsequently, we are able to apply the explainable ANOVA approximation for this basis and use Z-score transformed data in the method. We demonstrate the applicability of this procedure on the well-known forest fires data set from the UCI machine learning repository. The attribute ranking obtained from the ANOVA approximation provides us with crucial information about which variables in the data set are the most important for the detection of fires.
\end{abstract}

\keywords{ANOVA \and high-dimensional \and approximation \and interpretability \and normal distribution}

\section{Introduction}

In machine learning, the scale of our features is a key component in building models. When we work with data from applications we have to accept it as it is. In most cases, we cannot control where the nodes are lying. Let us, e.g., take recommendations in online shopping. We are only able analyze the customers that actually exist and what they bought in the shop. However, the features may lie on immensely different scales. If we measure, e.g., the time a customer spent in the shop in seconds as well as their age in years, the result will be a scale that contains values with thousands of seconds and a scale ranging from up to 90 years. Bringing those features on similar scales trough normalization may significantly improve performance of our model.

Two common methods for data normalization are min-max-normalization and Z-score normalization, see e.g.\ \cite{elements}. The former method will yield data in the interval $[0,1]$ and is especially useful if there is an intrinsic upper and lower bound for the values, e.g., when considering age. If we come back to our previous example, the time a customer spends in the shop would be less suitable since the values may have a wide range and we will probably have very few people with a significantly small or large time. In this case, the Z-score normalization makes much more sense. It tells us how many standard deviations our value lies away from the mean of the data resulting in a distribution with zero mean and variance one. 

The explainable ANOVA approximation method introduced in \cite{PoSc19a, PoSc19b, PoSc21} is based on the well-known multivariate analysis of variance (ANOVA) decomposition, see e.g.\ \cite{CaMoOw97, RaAl99, LiOw06, KuSlWaWo09, Holtz11, mcbook}, and relies on the existence of a complete orthonormal system in the space which is suitable for fast matrix-vector multiplication algorithms in grouped transformations, c.f.\ \cite{BaPoSc}. Until now, this method was always applied with min-max-normalization since it relied on the space $\L_2([0,1]^d)$ of square-integrable functions over the cube with the half-period cosine basis. It is our goal to modify the approach in order to create the possibility to work with standard normal distributed data, i.e., data that has been obtained trough Z-score normalization. 

We aim to achieve this by using the transformation ideas from \cite{KuoPaper} and \cite{NaPo18} in order to construct a complete orthonormal system in the space \begin{equation}\label{space}
	\L_2(\R^d,\omega) \coloneqq \left\{ \fun{f}{\R^d}{\R} \colon \norm{f}{\L_2(\R^d)} \coloneqq \sqrt{\int_{\R^d} \abs{f(\x)}^2 \, \omega(\x) \d\x} < \infty \right\}
\end{equation} with the probability density of the standard normal distribution \begin{equation}\label{density}
	\omega(\x) \coloneqq \prod_{i=1}^d \frac{1}{\sqrt{2 \pi}} \, \e^{-x_i^2/2} = (2\pi)^{-d/2} \, \e^{-\norm{\x}{2}^2/2}.
\end{equation} Combining this transformation with the half-period cosine basis allows for fast multiplications in the grouped transformations and makes the ANOVA approximation method applicable for Z-score normalized data.

As an example, we apply this approach to a data set about the detection of forest fires, see \cite{forest, CorMor07}. Constructing a model with the capability of efficiently predicting the size of the fire in this data set may provide a way of predicting the occurrences of fires. This creates the possibility of efficiently implementing appropriate counter-measures. In our time of climate change with massive forest fires every year, e.g., in Australia or the USA, this is an extremely current topic. With the interpretation capabilities of the ANOVA method, cf.\ \cite{PoSc21}, we are additionally able to explain the importance of our features and give reasonable explanation for the predictions.  

\section{Transformed Half-Period Cosine}\label{sec:transform}

In this section, it is our goal to construct a complete orthonormal system in the space $\L_2(\R^d,\omega)$ from \eqref{space} with the product density $\omega(\x)$ from \eqref{density}. This is the probability density function of the standard normal distribution, i.e., the normal distribution with zero mean and variance one. We have $\int_{\R^d} \omega(\x) \d\x = 1$ as well as $\sup_{\x \in \R^d} \omega(\x) = (2\pi)^{-d/2}$ which implies $\omega \in \L_{\infty}(\R^d)$. 

We aim to construct the basis using transformation ideas from \cite{NaPo18, KuoPaper} and the half-period cosine basis on $\L_2([0,1]^d)$. The orthonormal basis functions on $\L_2([0,1]^d)$ are given by \begin{equation}
	\phi^{\text{cos}}_{\k}(\x) = \sqrt{2}^{\norm{\k}{0}} \prod_{i=1}^d \cos(\pi k_i x_i), \,\k \in \N_0^d
\end{equation} with $\norm{\k}{0} \coloneqq \abs{\supp \k}$ and $\supp \k \coloneqq \{ s \in \{1,2,\dots,d\} \colon k_s \neq 0 \}$. We start from a given function $\fun{f}{\R^d}{\R}$, $f \in \L_2(\R^d,\omega)$, and aim to transform it onto the cube $[0,1]^d$. As transformation we propose \begin{equation}\label{psi}
	\fun{\psi}{[0,1]^d}{\R^d}, \, \psi(\x) = \sqrt{2} \begin{pmatrix}
		\mathrm{erf}^{-1}(2 x_1 - 1) \\
		\mathrm{erf}^{-1}(2 x_2 - 1) \\
		\vdots \\
		\mathrm{erf}^{-1}(2 x_d - 1)
	\end{pmatrix}
\end{equation} with the inverse transformation \begin{equation}\label{invpsi}
	\fun{\psi^{-1}}{\R^d}{[0,1]^d}, \, \psi^{-1}(\x) = \frac{1}{2}\begin{pmatrix}
		\mathrm{erf}(x_1/\sqrt{2})+1 \\
		\mathrm{erf}(x_2/\sqrt{2})+1 \\
		\vdots \\
		\mathrm{erf}(x_d/\sqrt{2})+1
	\end{pmatrix}.
\end{equation} The error function is given by \begin{equation*}
	\mathrm{erf}(x) = \frac{2}{\sqrt{\pi}} \int_{0}^{x} \e^{-t^2} \d t.
\end{equation*} As a result, we have the commutative diagram in \cref{fig:cd}. This allows us to transform the half-period cosine to a complete orthonormal system on $\L_2(\R^d,\omega)$ with the help of 

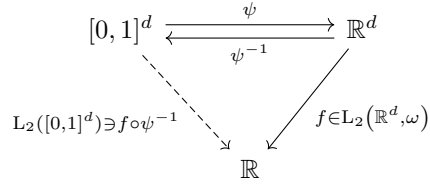
\begin{figure}[tbhp]
	\begin{center}	
	\begin{tikzcd}[row sep=huge]
		[0,1]^d \arrow[dashrightarrow, swap]{dr}{ \phantom{h(\psi(\cdot))\sqrt{\omega(\psi(\cdot))\,\psi'(\cdot)} \, =:} \L_{2}([0,1]^d) \ni
			f \circ \psi^{-1} } \arrow[shift left]{rr}{\psi} 
		& 
		& \mathbb{R}^{d} \arrow[shift left]{ll}{\psi^{-1}}\arrow{dl}{f \in 	\L_{2}\left(\mathbb{R}^d,\omega\right) \phantom{\ni\ni \ni\ni\ni\ni\ni\ni} 
		} \\
		& \mathbb{R} & 
	\end{tikzcd}
\end{center}
	\caption{Commutative diagram of the function and the transformations.}\label{fig:cd}
\end{figure}

\begin{lemma}\label{lem:spaces:help}
	Let $g, h \in \L_2([0,1]^d)$, $u, v \in \L_2(\R^d,\omega)$ with probability density $\omega$ from \eqref{density}, and transformation $\psi$, $\psi^{-1}$ as in \eqref{psi} and \eqref{invpsi} respectively. Then \begin{align*}
		\sp{g \circ \psi^{-1}}{h \circ \psi^{-1}}_{\L_2(\R^d,\omega)} &= \sp{g}{h}_{\L_2([0,1]^d)} \\
		\sp{u}{v}_{\L_2(\R^d,\omega)} &= \sp{u \circ \psi}{v \circ \psi}_{\L_2([0,1]^d)}
	\end{align*} and subsequently $\norm{h}{\L_2([0,1]^d)} = \norm{h \circ \psi^{-1}}{\L_2(\R^d,\omega)}$ and $\norm{u \circ \psi}{\L_2([0,1]^d)} = \norm{u}{\L_2(\R^d,\omega)}$.
\end{lemma}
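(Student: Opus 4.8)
The plan is to reduce both identities to a single change-of-variables computation, whose key ingredient is the observation that $\psi^{-1}$ is precisely the coordinate-wise cumulative distribution function of the standard normal distribution. Writing $\omega_1(x) \coloneqq (2\pi)^{-1/2}\e^{-x^2/2}$ for the one-dimensional density, I would first verify in one variable, using $\mathrm{erf}'(t) = \tfrac{2}{\sqrt\pi}\e^{-t^2}$ from the definition of the error function, that the $i$-th component of $\psi^{-1}$ from \eqref{invpsi} satisfies
\[
\frac{\partial}{\partial x_i}\Bigl(\tfrac{1}{2}\bigl(\mathrm{erf}(x_i/\sqrt 2)+1\bigr)\Bigr) = \frac{1}{2\sqrt 2}\cdot\frac{2}{\sqrt\pi}\,\e^{-x_i^2/2} = \frac{1}{\sqrt{2\pi}}\,\e^{-x_i^2/2} = \omega_1(x_i).
\]

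Since each component of $\psi^{-1}$ depends on a single coordinate, its Jacobian $(\psi^{-1})'(\x)$ is diagonal, so combining the $d$ derivatives above with the product form of $\omega$ in \eqref{density} gives
\[
\det\bigl(\psi^{-1}\bigr)'(\x) = \prod_{i=1}^d \omega_1(x_i) = \omega(\x) > 0.
\]
This identity is the heart of the matter; the remainder is bookkeeping. For the first statement I apply the change of variables $\y = \psi^{-1}(\x)$, under which $\d\y = \det(\psi^{-1})'(\x)\,\d\x = \omega(\x)\,\d\x$; since each component $(\psi^{-1})_i$ is a strictly increasing bijection from $\R$ onto $(0,1)$, the domain $\R^d$ is carried onto the open cube $(0,1)^d$, which differs from $[0,1]^d$ only by a Lebesgue null set. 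Hence
\[
\sp{g\circ\psi^{-1}}{h\circ\psi^{-1}}_{\L_2(\R^d,\omega)} = \int_{\R^d} (g\,h)\bigl(\psi^{-1}(\x)\bigr)\,\omega(\x)\,\d\x = \int_{(0,1)^d} g(\y)\,h(\y)\,\d\y = \sp{g}{h}_{\L_2([0,1]^d)}.
\]

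For the second statement I run the substitution in the opposite direction, $\x = \psi(\y)$: by the inverse function theorem $\abs{\det\psi'(\y)} = 1/\omega(\psi(\y))$, so the weight cancels and $\int_{\R^d} u\,v\,\omega\,\d\x = \int_{(0,1)^d} (u\circ\psi)(v\circ\psi)\,\d\y$, which is exactly the claim. The two norm equalities are then the diagonal cases $h=g$ and $v=u$ followed by taking square roots. I anticipate no genuine difficulty beyond the Jacobian identity; the only points needing care are measure-theoretic rather than conceptual — that the open/closed cube discrepancy is negligible as a null set, that the change-of-variables theorem applies to merely $\L_2$ (hence measurable) integrands $g,h$ and not just continuous ones, and that the products $g\,h$ and $u\,v\,\omega$ are integrable, which follows from $g,h\in\L_2([0,1]^d)$ and $u,v\in\L_2(\R^d,\omega)$ by the Cauchy–Schwarz inequality.
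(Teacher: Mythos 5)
Your proof is correct and takes essentially the same approach as the paper: a single change of variables in which the Jacobian of the transformation cancels the Gaussian weight, applied symmetrically to both identities. The only (cosmetic) difference is direction — you substitute $\y=\psi^{-1}(\x)$ and observe $\det(\psi^{-1})'(\x)=\omega(\x)$ via the normal CDF, while the paper substitutes $\x=\psi(\b t)$ and verifies $\omega(\psi(\b t))\,\psi'(\b t)=1$, which is the same identity read backwards; your extra care about the null-set boundary and Cauchy--Schwarz integrability is sound but not essential.
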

\begin{proof}
	Let $g, h \in \L_2([0,1]^d)$ and $u, v \in \L_2(\R^d,\omega)$. Then we insert the definition and perform a change of variables as follows \begin{align*}
		\sp{g \circ \psi^{-1}}{h \circ \psi^{-1}}_{\L_2(\R^d,\omega)} &= \int_{\R^d} g(\psi^{-1}(\x)) \, h(\psi^{-1}(\x)) \, \omega(\x) \, \d\x \\
		&= \int_{\R^d} g(\b t) \, h(\b t) \, \omega(\psi(\b t)) \, \psi'(\b t) \, \d\b t.
	\end{align*} As functional determinant we obtain $\psi'(\b t) = \prod_{i=1}^{d} \sqrt{2\pi} \, \e^{\mathrm{erf}^{-2}(2x_i-1) }$ and subsequently \begin{equation*}
		\omega(\psi(\b t)) \, \psi'(\b t) = \prod_{i=1}^d \frac{1}{\sqrt{2\pi}} \e^{-\mathrm{erf}^{-2}(2x_i-1)} \cdot \sqrt{2\pi} \,  \e^{\mathrm{erf}^{-2}(2x_i-1) } = 1.
	\end{equation*} This proves the first equality. For the second equality, we way use an analogous procedure.
\end{proof}

\begin{theorem}\label{thm:system}
	The functions $(\phi_{\b k}^{\mathrm{trafo}})_{\k \in \N_0^d}$ with \begin{equation}\label{trafo}
		\phi_{\b k}^{\mathrm{trafo}}(\x) \coloneqq (\phi^{\mathrm{cos}}_{\k}\circ\psi^{-1})(\x) = \sqrt{2}^{\norm{\k}{0}} \prod_{i=1}^d \cos\left(\pi k_i \frac{\mathrm{erf}(x_i/\sqrt{2})+1}{2}\right)
	\end{equation} form a complete orthonormal system in $\L_2(\R^d,\omega)$.
\end{theorem}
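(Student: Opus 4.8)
The plan is to derive both required properties—orthonormality and completeness—from the fact that the half-period cosine system $(\phi_{\k}^{\mathrm{cos}})_{\k \in \N_0^d}$ is already a complete orthonormal system in $\L_2([0,1]^d)$, transporting everything through the isometry furnished by \cref{lem:spaces:help}. In other words, the whole statement should reduce to the classical theory on the cube together with the change-of-variables identities just established.

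For orthonormality I would simply invoke the first identity of \cref{lem:spaces:help} with $g = \phi_{\k}^{\mathrm{cos}}$ and $h = \phi_{\b l}^{\mathrm{cos}}$. Since by definition $\phi_{\k}^{\mathrm{trafo}} = \phi_{\k}^{\mathrm{cos}} \circ \psi^{-1}$, this yields $\sp{\phi_{\k}^{\mathrm{trafo}}}{\phi_{\b l}^{\mathrm{trafo}}}_{\L_2(\R^d,\omega)} = \sp{\phi_{\k}^{\mathrm{cos}}}{\phi_{\b l}^{\mathrm{cos}}}_{\L_2([0,1]^d)} = \delta_{\k,\b l}$, so orthonormality of the transformed system is immediate from that of the cosine system.

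The substantive step is completeness. Here I would take an arbitrary $f \in \L_2(\R^d,\omega)$ and first observe, via the norm identity $\norm{f \circ \psi}{\L_2([0,1]^d)} = \norm{f}{\L_2(\R^d,\omega)}$ of \cref{lem:spaces:help}, that $f \circ \psi$ lies in $\L_2([0,1]^d)$. Completeness of the cosine system then provides an expansion $f \circ \psi = \sum_{\k \in \N_0^d} c_{\k} \, \phi_{\k}^{\mathrm{cos}}$ converging in $\L_2([0,1]^d)$, with coefficients $c_{\k} = \sp{f \circ \psi}{\phi_{\k}^{\mathrm{cos}}}_{\L_2([0,1]^d)}$; using the second identity together with $\phi_{\k}^{\mathrm{trafo}} \circ \psi = \phi_{\k}^{\mathrm{cos}}$ one checks $c_{\k} = \sp{f}{\phi_{\k}^{\mathrm{trafo}}}_{\L_2(\R^d,\omega)}$, i.e.\ these are exactly the Fourier coefficients with respect to the transformed system. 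Denoting by $S_N$ the partial sums of the cosine expansion over an exhaustion of $\N_0^d$, I would apply the norm identity $\norm{h}{\L_2([0,1]^d)} = \norm{h \circ \psi^{-1}}{\L_2(\R^d,\omega)}$ to $h = f \circ \psi - S_N$. Because $(f \circ \psi - S_N) \circ \psi^{-1} = f - S_N \circ \psi^{-1}$, this transfers the convergence back, giving $\norm{f - S_N \circ \psi^{-1}}{\L_2(\R^d,\omega)} = \norm{f\circ\psi - S_N}{\L_2([0,1]^d)} \to 0$. Hence the transformed system spans a dense subspace and is therefore complete.

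I expect no genuine analytic difficulty beyond \cref{lem:spaces:help} itself; the only points requiring care are bookkeeping ones. First, one must confirm that $\psi$ and $\psi^{-1}$ are mutually inverse bijections between $(0,1)^d$ (up to the measure-zero boundary) and $\R^d$, which holds because $\mathrm{erf}$ maps $\R$ bijectively onto $(-1,1)$, so that all compositions above are well defined pointwise almost everywhere. Second, one should fix the mode of summation explicitly—an exhaustion of the countable index set $\N_0^d$—so that the partial sums $S_N$ and the convergence statements are unambiguous. With these conventions in place, the argument is purely a transport of the $\L_2([0,1]^d)$ theory through the isometry, and the completeness of the half-period cosine basis is the only external fact used.
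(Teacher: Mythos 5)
Your proposal is correct and follows essentially the same route as the paper: orthonormality via the inner-product identity of \cref{lem:spaces:help}, and completeness by transporting the expansion of $f\circ\psi$ in the half-period cosine basis back through the norm isometry, after checking that the coefficients agree. The extra bookkeeping you flag (bijectivity of $\psi$ up to null sets, fixing an exhaustion of $\N_0^d$) is a slight refinement of the paper's argument but not a different method.
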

\begin{proof} 
	Clearly, we have $\phi_{\b k}^{\mathrm{trafo}} \in \L_2(\R^d,\omega)$, $\k \in \N_0^d$ by \cref{lem:spaces:help}. The orthonormality follows from \cref{lem:spaces:help} and the orthonormality of the half-period cosine basis, i.e., for $\b k, \b\ell \in \N_0^d$ we have \begin{align*}
		\sp{\phi_{\b k}^{\mathrm{trafo}}}{\phi_{\b\ell}^{\mathrm{trafo}}}_{\L_2(\R^d,\omega)} &= \sp{\phi^{\mathrm{cos}}_{\k}\circ\psi^{-1}}{\phi^{\mathrm{cos}}_{\k}\circ\psi^{-1}}_{\L_2(\R^d,\omega)} \\ 
		&= \sp{\phi^{\mathrm{cos}}_{\k}}{\phi^{\mathrm{cos}}_{\b\ell}}_{\L_2([0,1]^d)} = \delta_{\b k, \b\ell}.
	\end{align*}
	It remains to show that the system is complete, i.e., for every $f \in \L_2(\R^d,\omega)$ we have \begin{equation*}
		\lim_{n \rightarrow \infty} \norm{f - \sum_{i=1}^n \sp{f}{\phi_{\k_n}^{\mathrm{trafo}}}_{\L_2(\R^d,\omega)} \, \phi_{\k_n}^{\mathrm{trafo}}}{\L_2(\R^d,\omega)} = 0
	\end{equation*} with $\k_n$, $n=1,2,\dots$, an order of $\b k \in \N_0^d$. First of all, we have \begin{equation*}
		\hat{c}_{\k} \coloneqq \sp{f}{\phi_{\k}^{\mathrm{trafo}}}_{\L_2(\R^d,\omega)} = \sp{f \circ \psi}{\phi_{\k}^{\mathrm{cos}}}_{\L_2([0,1])}
	\end{equation*} by \cref{lem:spaces:help}. We apply the norm equality from \cref{lem:spaces:help} to obtain \begin{equation*}
		\lim_{n \rightarrow \infty} \norm{f - \sum_{i=1}^n \hat{c}_{\k} \, \phi_{\k_n}^{\mathrm{trafo}}}{\L_2(\R^d,\omega)} = \lim_{n \rightarrow \infty} \norm{f \circ \psi - \sum_{i=1}^n \hat{c}_{\k} \, \phi_{\k_n}^{\mathrm{cos}}}{\L_2([0,1]^d)}
	\end{equation*} and since $(\phi_{\k}^{\mathrm{cos}})_{\k \in \N_0^d}$ is complete in $\L_2([0,1]^d)$ and $f \circ \psi \in \L_2([0,1]^d)$, the limit is zero and our statement is proven.
\end{proof}

In summary, we have constructed a complete orthonormal system $(\phi_{\b k}^{\mathrm{trafo}})_{\k \in \N_0^d}$ on the weighted space $\L_2(\R^d,\omega)$ using transformation ideas from \cite{NaPo18} and the well-known half-period cosine basis $(\phi_{\k}^{\mathrm{cos}})_{\k \in \N_0^d}$ on $\L_2([0,1]^d)$. The transformation $\psi$ in one dimension and the corresponding basis functions $\phi_{\b k}^{\mathrm{trafo}}$ are visualized in \cref{fig:visual}.

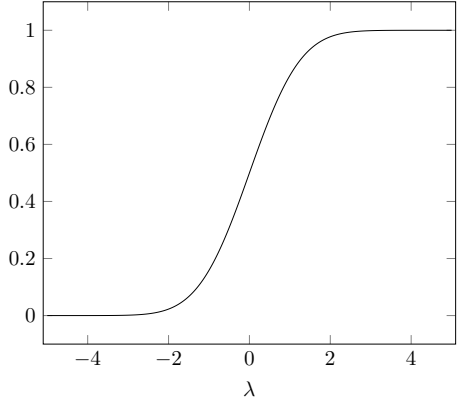
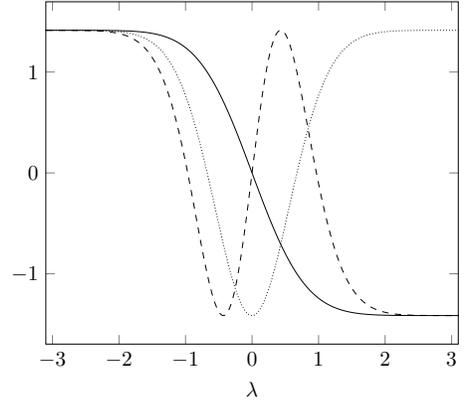
\begin{figure}[tbhp]
	\centering
	\subfloat[Transformation $\psi$ from \eqref{psi} in one dimension]{
		\begin{tikzpicture}[scale=0.8] \begin{axis}[
				xmin = -5.1, xmax = 5.1,
				xlabel = $\lambda$,
				]
				\addplot[no marks] table[x = x1, y = x2] {data.csv};
		\end{axis}\end{tikzpicture}
	} \hfill \subfloat[Basis functions $\phi_{k}^{\mathrm{trafo}}$ from \eqref{trafo} in one dimension for $k=1$ (solid), $k=2$ (dotted), and $k=3$ (dashed)]{\begin{tikzpicture}[scale=0.8] \begin{axis}[
				xmin = -3.1, xmax = 3.1,
				xlabel = $\lambda$,
				]
				\addplot[no marks, solid] table[x = x1, y = x4] {data.csv};
				\addplot[no marks, densely dotted] table[x = x1, y = x5] {data.csv};
				\addplot[no marks, dashed] table[x = x1, y = x6] {data.csv};
	\end{axis}\end{tikzpicture}}
	\caption{Transformation $\psi$ and transformed basis functions $\phi_{k}^{\mathrm{trafo}}$ in one dimension}\label{fig:visual}
\end{figure}

\section{Interpretable ANOVA Approximation}\label{sec:anova}

In this section, we briefly summarize the interpretable ANOVA (analysis of variance) approximation method and the idea of grouped transformations, see \cite{PoSc19a, BaPoSc}. The approach was considered for periodic functions, but has been expended to non-periodic functions in \cite{PoSc19b, PoSc21}. In this paper, we focus on functions $\fun{f}{\R^d}{\R}$ from $\L_2(\R^d,\omega)$ with probability density $\omega$ from \eqref{density}. Since $\omega$ is the standard normal distribution, this function space is of a high relevance. It allows us e.g.\ to work with data from applications that has been Z-transformed, i.e., data with zero mean and variance one, see e.g.\ \cite{elements}. Since the transformed half-period cosine $( \phi_{\b k}^{\mathrm{trafo}} )_{\k \in \N_0^d}$, see \cref{thm:system}, is a complete orthonormal system in the space  $\L_2(\R^d,\omega)$, we have \begin{equation}\label{baserep}
	f(\x) = \sum_{\k \in \Z^d} \fc{\k}{f} \phi_{\b k}^{\mathrm{trafo}}(\x), \quad \fc{\k}{f} = \sp{f}{\phi_{\b k}^{\mathrm{trafo}}}_{\L_2(\R^d,\omega)},
\end{equation} and through Parseval's identity $\norm{f}{\L_2(\R^d,\omega)}^2 = \sum_{\k \in \Z^d} \abs{\fc{\k}{f}}^2$.

The classical ANOVA decomposition, cf.~\cite{CaMoOw97, RaAl99, LiOw06, Holtz11}, provides us with a unique decomposition in the frequency domain as shown in \cite{PoSc19a}. We denote the coordinate indices with $\D = \{ 1,2,\dots, d\}$ and subsets as bold small letters, e.g., $\u \subset \D$. The ANOVA terms are defined as \begin{equation*}
	f_{\u}(\x) = f_{\u}(\x_{\u}) \coloneqq \sum_{\substack{\k \in \Z^d \\ \supp \k = \u}} \fc{\k}{f} \phi_{\b k}^{\mathrm{trafo}}(\x).
\end{equation*} The function can then be uniquely decomposed as \begin{equation*}
	f(\x) = \sum_{\u \subset \D} f_{\u}(\x)
\end{equation*} into $\abs{\mathcal{P}(\D)} = 2^d$ ANOVA terms where $\mathcal{P}(\D)$ is the potency set of $\D$. Here, the exponentially growing number of terms shows an expression of the curse of dimensionality in the decomposition. 

It is our goal to obtain information on how important the ANOVA terms $f_{\u}$ are with respect to the function $f$. In order to measure this, we define the variance of a function $f$ as \begin{equation*}
	\va{f} \coloneqq \norm{f}{\L_2(\R^d,\omega)}^2 - \abs{\fc{\b 0}{f}}^2 = \sum_{\k \in \Z^d\setminus\{\b 0\}} \abs{\fc{\k}{f}}^2.
\end{equation*} Note that we have the special case $\va{f_{\u}} = \norm{f_{\u}}{\L_2(\R^d,\omega)}^2$, $\u \subset \D$. The relative importance with respect to $f$ is then measured via global sensitivity indices (GSI) or Sobol indices, see \cite{So90, So01, LiOw06}, defined as \begin{equation}\label{gsi}
	\gsi{\u}{f} \coloneqq \frac{\va{f_{\u}}}{\va{f}}.
\end{equation} From the GSI we get a motivation for the concept of effective dimensions, specifically the superposition dimension as one notion of effective dimension. For a given $\alpha \in [0,1]$ it is defined as \begin{equation}\label{eq:ds}
	\mathrm{d}^{(\mathrm{sp})} \coloneqq \min \left\{ s \in \D \colon \frac{1}{\va{f}} \sum_{\substack{\u \subset \D \\ \au \leq s}} \norm{f_{\u}}{\L_2(\R^d,\omega)}^2 \geq \alpha \right\}.
\end{equation} The superposition dimension $\mathrm{d}^{(\mathrm{sp})}$ tells us that we can explain the $\alpha$-part of the variance of $f$ by terms $f_{\u}$ with $\u \leq d_s$.

Using subsets of ANOVA terms $U \subset \mathcal{P}(\D)$, it is our goal to find a way to circumvent the curse of dimensionality for efficient approximation. In order to achieve this, we aim to truncate the ANOVA decomposition by taking only the ANOVA terms in $U$ into account. The truncated ANOVA decomposition is then defined as \begin{equation*}
	\mathrm{T}_{U} f (\x) = \sum_{\u \in U} f_{\u}(\x). 
\end{equation*} A specific idea for the truncation comes from the superposition dimension $\mathrm{d}^{(\mathrm{sp})}$ in \eqref{eq:ds}. The idea is to take only variable interactions into account that contain $d_s$ or less variables, i.e., the subset of ANOVA terms is \begin{equation}\label{termset}
	U(d, d_s) \coloneqq \left\{ \u \subset \D\colon \au \leq d_s \right\}.
\end{equation} Since $d_s$ does not necessarily have to coincide to the superposition dimension $\mathrm{d}^{(\mathrm{sp})}$, we call it \emph{superposition threshold}. A well-known fact from learning theory is that the number of terms in $U(d, d_s)$ grows only polynomially in $d$ for fixed $d_s < d$, i.e., $$\abs{U(d, d_s)} \leq \left(\frac{\e d}{d_s}\right)^{d_s}$$ which has reduced the curse of dimensionality.

In the following, we argue why the truncation by a superposition threshold $d_s$ works well in relevant cases. For the approximation of functions that belong to a space $H^s(\R^d,\omega)\subset \L_2(\R^d,\omega)$ that characterizes the smoothness $s > 0$ by the decay of the basis coefficients $\fc{\k}{f}$, we can show upper bounds on the superposition dimension $\mathrm{d}^{(\mathrm{sp})}$ for $\alpha \in [0,1]$, see e.g.\ \cite{PoSc19a}. In fact, there are types of smoothness that are proven to yield a low upper bound for the superposition dimension specifically dominating-mixed smoothness with POD (product and order-dependent) weights, cf.~\cite{KuSchwSl12, GrKuNi14, KuNu16, GrKuNu18, PoSc19a}. 

In terms of real data from applications, the situation is much different. Here, we cannot make the assumption that in complete generality we have a low superposition dimension. However, there are many application scenarios where numerical experiments successfully showed that this is indeed the case, see e.g.~\cite{CaMoOw97}. Since we generally do not have a-priori information, we work with low superposition thresholds $d_s$ for truncation and validate on our test data. 

\subsection{Approximation Procedure}\label{sec:procedure}

In this section, we briefly discuss how the approximation is numerically obtained and how we can interprete the results. In this section, we assume a given subset of ANOVA terms $U \subset \mathcal{P}(\D)$. This set may be equal to or a subset of $U(d,d_s)$. We have given scattered data in the form of a set $\X = \{ \x_1, \x_2, \dots, \x_M \}\subset \R^d$ of standard normal distributed nodes and values $\y \in \R^M$, $M \in \N$. Moreover, we assume that there is an $\L_2(\R^d,\omega)$ function $f$ of form \eqref{baserep} with $f(\x_i) \approx y_i$ which we want to approximate.

First, we truncate $f$ to the set $U$ such that $f \approx \mathrm{T}_U f$. However, there are still infinitely many coefficients and therefore we perform a truncation to partial sums on finite support index sets \begin{equation}\label{indexsets}
	\I_{\emptyset} = \{0\}, \text{  and  } \I_{\u} = \{ 1, 2, \dots, N_{\au}-1 \}^{\au}
\end{equation} with order-dependent parameters $N_{\au} \in \N, \au = 1,2,\dots,d_s$, for every ANOVA term $f_{\u}$, $\u \in U$. Using the projections $P_{\u} \I_{\u} = \{ \k \in \N_0^d \colon \k_{\u} \in \I_{\u}, \k_{\uc} = \b 0 \}$, we obtain \begin{equation*}
	f_{\u}(\x) \approx \sum_{\k \in P_{\u} \I_{\u}} \fc{\k}{f} \phi_{\b k}^{\mathrm{trafo}}(\x).
\end{equation*} Now, we taking the union $\I(U) = \bigcup_{\u \in U} P_{\u} \I_{\u}$ yields $$f(\x) \approx \sum_{\k \in \I(U)} \fc{\k}{f} \phi_{\b k}^{\mathrm{trafo}}(\x).$$ The unknown coefficients $\fc{\k}{f}$ are now to be determined.

We aim to achieve this by solving the regularized least-squares problem \begin{equation}\label{eq:minimization}
	\hat{\b f} = (\hat{f}_{\k})_{\k \in \I(U)} = \argmin_{\hat{\b g} \in \R^{\abs{\I(U)}}} \norm{\y - \b{F}(\X,\I(U)) \hat{\b g}}{2}^2 + \lambda \norm{\hat{\b g}}{2}^2,
\end{equation} cf.\ \cite{PoSc19a, PoSc19b, BaPoSc}, with the basis matrix $\b{F}(\X,\I(U)) = (\phi_{\b k}^{\mathrm{trafo}}(\x))_{\x \in \X, \k \in \I(U)}$. We solve the problem using the iterative LSQR solver \cite{PaSa82}. In order to apply LSQR, we rewrite \eqref{eq:minimization} by observing the equality \begin{equation}\label{bartel}
	\norm{\y - \b{F}(\X,\I(U)) \hat{\b g}}{2}^2 + \lambda \norm{\hat{\b g}}{2}^2 = \norm{\begin{pmatrix}
			\y \\ \b 0
		\end{pmatrix} - \begin{pmatrix}
			\b{F}(\X,\I(U)) \\ \sqrt{\lambda} \b I
		\end{pmatrix} \hat{\b g}}{2}^2
\end{equation} with $\b 0$ the zero vector in $\R^{\abs{\I(U)}}$ and $\b I \in \R^{\abs{I(U)},\abs{I(U)}}$ the identity matrix. Note that we always have a unique solution in this case since the matrix $$ \begin{pmatrix}
	\b{F}(\X,\I(U)) \\ \sqrt{\lambda} \b I
\end{pmatrix} $$ has full column rank. However, the solution depends on the regularization parameter $\lambda$.

We apply the matrix-free variant of LSQR, i.e., we never explicitly construct the matrix $\b{F}(\X,\I(U))$. The grouped transformations introduced in \cite{BaPoSc} provide \emph{oracle} functions for the multiplications of $\b{F}(\X,\I(U))$ and its transposed $\b{F}\transp(\X,\I(U))$ with vectors. For our specific basis functions $\phi_{\b k}^{\mathrm{trafo}}$ the grouped transformations are based on the non-equispaced fast cosine transform or NFCT, see \cite{KeKuPo09, PlPoStTa18}. The transformation uses parallelization to separate our multiplication into smaller, up to $d_s$-dimensional NFCTs which results in an efficient algorithm. For more details we refer to \cite{BaPoSc}.

In order to solve the minimization we employ the iterative LSQR solver \cite{PaSa82} which needs a method for efficient multiplication with $\b{F}(\X,\I(U))$ and its adjoint $\b{F}^\ast(\X,\I(U))$ in the periodic case, otherwise its transposed matrix. This is realized by the Grouped Transformation idea in \cite{BaPoSc} based on the NFFT or the NFCT, see \cite{KeKuPo09, PlPoStTa18}.

One key fact is that the nodes $\X$ have to be distributed according to the probability density $\omega$ of the space such that the Moore–Penrose inverse $\b{F}^\dagger(\X,\I(U))$ is well-conditioned. In our case, $\omega$ is the density of the standard normal distribution, i.e., the nodes $\X$ have to be distributed accordingly. For a detailed discussion on the properties of those matrices we refer to \cite{KaUlVo19, Moeller2021} where our basis is a special case.

We use the global sensitivity indices $\gsi{\u}{S(\X,\I(U)) f }$, $\u \in U$, from the approximation $S(\X,\I(U)) f (\x)$ to compute approximations for the global sensitivity indices $\gsi{\u}{f}$ of the function $f$. Here, we do not consider the index to be a good approximation if the values are close together, but rather if there order is identical, i.e., we have \begin{equation*}
	\gsi{\u_1}{ f } \leq \gsi{\u_2}{f} \Longrightarrow \gsi{\u_1}{S(\X,\I(U)) f } \leq \gsi{\u_2}{S(\X,\I(U)) f } 
\end{equation*} for any pair $\u_1, \u_2 \in U$. We assume that this is the case for our choices of index sets $\I(U)$. In particular, the quality of the approximation corresponds to the accuracy of this assumption. 

In order to rank the influence of the variables $x_1, x_2, \dots, x_d$ we use the ranking score \begin{equation}\label{eq:ranking}
	r(i) = \frac{\sum_{\u \in \{\v \in U \colon i \in \v\}} \abs{\{\v \in U \colon \au = \av, i \in \v\}}^{-1} \gsi{\u}{S(\X,\I(U)) f }}{\sum_{\u \in U} \left(\sum_{i\in \u} \abs{\{\v \in U \colon \au = \av, i \in \v\}}^{-1}\right) \gsi{\u}{S(\X,\I(U)) f } } .
\end{equation} for $i=1,2,\dots,d$ which was introduced in \cite{PoSc21}. Note that this score has order-dependent weight and is normalized such that $\sum_{i \in \D} r(i) = 1$. Computing every score $r(i)$, $i \in \D$ provides an attribute ranking with respect to $U$ showing the percentage that every variable adds to the variance of the approximation. We then conclude that if we have a good approximation $S(\X,\I(U)) f $, the corresponding attribute ranking will be close to the attribute ranking of the function $f$.

\subsection{Active Set}\label{sec:set_detection}

In this section we describe how to obtain a set of ANOVA terms $U$ for approximation. We are sill working with the scattered data $\mathcal X \subset \R^d$ and $\y \in \R^M$, $M \in \N$. The values $\y$ may also contain noise. Our first step is to limit the variable interactions by a superposition threshold $d_s \in \D$ which may have been estimated by known smoothness properties (or different a-priori information) or set to a sensible value if nothing is known. It is also possible to determine an optimal value through cross-validation. We choose the order-dependent parameters $N_{\au}$, $\au = 1,2,\dots,d_s$, cf.\ \eqref{indexsets}, to obtain $\I(U(d,d_s))$ and with the procedure described in \cref{sec:procedure}, the approximation $S(\X, \I(U(d,d_s))) f$. 

From the approximation $S(\X,\I(U(d,d_s))) f$ we can then calculate the global sensitivity indices $\gsi{\u}{S(\X,\I(U(d,d_s))) f }$, $\u \in U(d,d_s)$, and an attribute ranking $r(i)$, $i \in \D$, see \eqref{eq:ranking}. Then we are able to apply the strategies proposed in \cite{PoSc21} to truncate terms from the set $U(d,d_s)$.

One obvious method is the truncation of an entire variable $x_i$, $i \in \D$, if the attribute ranking $r(i)$ shows that its influence is insignificant. Specifically, that would translate to an active set $U^\ast = \{\u \in U(d,d_s) \colon \i \notin \u \}$. This leads to a reduction in dimensionality of the problem and greatly simplifies our model. 

A different method is \emph{active set thresholding} where we chooses a threshold vector $\b\eps \in (0,1)^{d_s}$ and reduce the ANOVA terms to the set \begin{equation*}
	U^\ast(\b\eps) \coloneqq \left\{ \u \in U(d,d_s) \colon \gsi{\u}{S(\X,\I(U(d,d_s))) f } > \eps_{\au}  \right\}.
\end{equation*} Here, $\eps_{\au}$ denotes the $\au$th entry of the vector $\b\eps$. The parameter vector $\b\eps$ allows control over how much of the variance may be sacrificed in order to simplify the model function.

In summary, it is necessary to interpret the information from the approximation $S(\X,\I(U(d,d_s))) f$ and decide on strategies for truncating the set of ANOVA terms. One may also use different strategies to obtain an active set or any combination of the multiple approaches, see e.g.\ \cite{PoSc19b,PoSc21}. Of course, it is also possible to repeat the procedure multiple times, i.e., through cross-validation.

\section{Forest Fire Prevention}

We now apply the previously described method to the data set \cite{forest} from the UC Irvine machine learning repository. The dataset contains information about forest fires in the Montesinho national park in the Trás-os-Montes northeast region of Portugal. The data was collect from 2002 to 2003. Specifically, we have $d = 12$ attributes about the fires and the target variable is the area of the forest that was destroyed by it. If we obtain an efficient model, it can be possible to predict the risk for a future forest fire using parameters that can be easily measured. This information can then be used to prepare appropriate countermeasures. The data set has been thoroughly considered in \cite{CorMor07} and we compare to the results they obtained.

\begin{table}[tbhp]
	\begin{center}
		\begin{tabular}{ccc} 
			\toprule
			\textbf{group} & \textbf{name} & \textbf{description} \\
			\midrule
			\multirow{2}{*}{spatial (S)} & X & x-coordinate (1 to 9) \\
			& Y & y-coordinate (1 to 9) \\ \midrule
			\multirow{2}{*}{temporal (T)} & month & month of the year (1 to 12) \\
			& day & day of the week (1 to 7) \\ \midrule
			\multirow{4}{*}{FWI} & FFMC & FFMC code \\
			& DMC & DMC code \\
			& DC & DC code \\
			& ISI & ISI index \\ \midrule
			\multirow{4}{*}{meteorological (M)} & temp & outside temperature in °C \\
			& RH & outside relative humidity in \% \\
			& wind & outside wind speed in km/h \\
			& rain & outside rain in mm/$\mathrm{m}^2$ \\ \bottomrule
		\end{tabular}
		\caption{Attributes and their corresponding groups}\label{tab:attributes}
	\end{center}
\end{table}

We group the 12 attributes into 4 categories as in \cite{CorMor07}, i.e., spatial, temporal, FWI system, and meteorological data, see \cref{tab:attributes}. The spatial attributes describe the spatial location of the fire in a 9 by 9 grid of our considered region. The temporal attributes are the month of the year and the day of the week when the fire occurred. The forest fire weather index (FWI), cf.\ \cite{Taylor2006}, is the Canadian system for rating fire danger and the datasets collects several components of it. Moreover, four meteorological attributes which are used by the FWI index were selected. The target variable describes the are that was burned by the fire. 

In terms of pre-processing, we apply a Z-score transformation to the the variables and the logarithmic transformation $\log(1+\cdot)$ to the burned area. The Z-score transformation achieves that our data has zero mean and unit variance. The logarithmic transformation on the target is necessary since it shows a positive skew with a large number of fires that have a small size. We denote the data $(\X,\y)$ with $\X = \{\x_1,\x_2,\dots,\x_M\} \subset \R^{12}$, $M = 517$, and $\y \in \R^M$. In the following subsections, we do not use all of the variables, but build models based only on some groups as denoted in \cref{tab:attributes}, e.g., \textbf{STM} says that we use spatial, temporal and meteorological attributes without the FWI. 

\cref{tab:FireRes} shows the overall results of our experiment (ANOVA) combined with the benchmark data from \cite{CorMor07}. Each value, our ANOVA results as well as the others, were obtained by averaging over executing a 10-fold cross-validation 30 times. This results in a total of 300 experiments. We used a superposition threshold of $d_s = 2$, cf.\ \eqref{termset}, and therefore needed to detect optimal choices for the parameters $N_{1}$ and $N_2$ from \eqref{indexsets}, see \cref{tab:FireRes:params}. Every experiment utilized 90\% of the data as training set $(\X_{\mathrm{train}}, \y_{\mathrm{train}})$ and 10\% of the data as test set $(\X_{\mathrm{test}}, \y_{\mathrm{test}})$. The best performing model was selected based on the mean absolute deviation \begin{equation}
	\mathrm{MAD} = \frac{1}{\abs{\X_{\mathrm{test}}}} \sum_{i = 1}^{\abs{\X_{\mathrm{test}}}} \abs{(\tilde{\y})_i - (\y_{\mathrm{test}})_i}
\end{equation} with $\tilde{\y}$ the predictions of our model for the data points in the test set $\X_{\mathrm{test}}$. As a second error measure, we use the root mean square error \begin{equation}
	\mathrm{RMSE} = \frac{1}{\sqrt{\abs{\X_{\mathrm{test}}}}} \sqrt{\sum_{i = 1}^{\abs{\X_{\mathrm{test}}}} \abs{(\tilde{\y})_i - (\y_{\mathrm{test}})_i}^2}.
\end{equation} We are able to outperform the previously applied method for every subset of attributes in both MAD and RMSE error. Notably, the difference in the RMSE that penalizes larger deviations in the burned area stronger than the MAD is much more significant. 

\begin{table}[tbhp]
	\centering
	\begin{tabular}{c|cccc}
		\toprule
		& \multicolumn{4}{c}{attribute selection} \\
		model & S T FWI & S T M & FWI & M \\ \midrule
		Naive & 18.61 (63.7) & 18.61 (63.7) & 18.61 (63.7) & 18.61 (63.7) \\
		MR & 13.07 (64.5) & 13.04 (64.4) & 13.00 (64.5) & 13.01 (64.5) \\
		DT & 13.46 (64.4) & 13.43 (64.6) & 13.24 (64.4) & 13.18 (64.5) \\
		RF & 13.31 (64.3) & 13.04 (64.5) & 13.38 (64.0) & 12.93 (64.4) \\
		NN & 13.09 (64.5) & 13.92 (68.9) & 13.08 (64.6) & 13.71 (66.9) \\
		SVM & 13.07 (64.7) & 13.13 (64.7) & 12.86 (64.7) & 12.71 (64.7) \\
		ANOVA & \textbf{12.75} (\textbf{45.77}) & \textbf{12.81} (\textbf{46.7}) & \textbf{12.76} (\textbf{46.09}) & \underline{\textbf{12.65}} (\underline{\textbf{45.69}}) \\
		\bottomrule
	\end{tabular}
	\caption{MAD and RMSE (in brackets) for the best performing model in the corresponding attribute subset (\underline{underline} - overall best result, \textbf{bold} - best result for this selection).}
	\label{tab:FireRes}
\end{table} 

\begin{table}[tbhp]
	\centering
	\begin{tabular}{c|cccc}
		\toprule
		attribute selection & $N_1$ & $N_2$ & $\abs{I}$ & $\lambda$ \\ \midrule
		S T FWI & 2 & 6 & 149 & $\e^9$ \\
		S T M & 2 & 10 & 261 & $\e^{10}$ \\
		FWI & 2 & 4 & 23 & $\e^8$ \\
		M & 2 & 8 & 47 & $\e^7$ \\
		\bottomrule
	\end{tabular}
	\caption{Optimal parameter choices for the experiments from \cref{tab:FireRes}.}
	\label{tab:FireRes:params}
\end{table} 

While we replicated the setting of \cite{CorMor07} for benchmark purposes, it remains our goal to identify the most important attributes for the detection of forest fires. Therefore, we now use all 12 attributes of the dataset in obtaining our approximation and subsequently interpret the results. \cref{fig:argsi} shows the attribute ranking $r(i)$, $i=1,2,\dots,12$, and the global sensitivity indices $\gsi{\u}{S(\X_{\mathrm{train}},\I(U(12,2)))f}$, $\u \in U(12,2)$, after computing an approximation with $N_1 = N_2 = 2$ and $\lambda = 1.0$. 

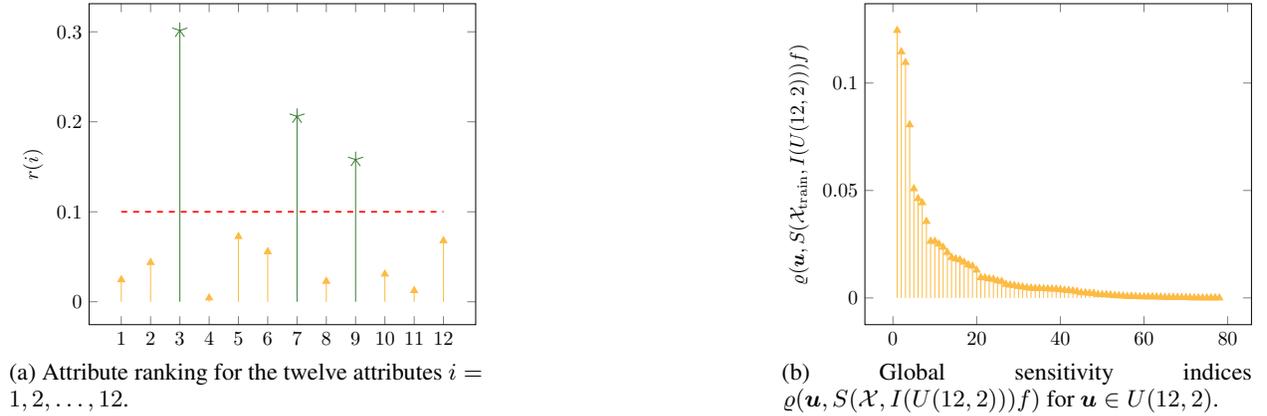
\begin{figure}[tbhp]
	\centering
	\subfloat[Attribute ranking for the twelve attributes $i=1,2,\dots,12$.]{
		\begin{tikzpicture}[scale=0.75]
			\begin{axis}[ylabel={$r(i)$},
				xtick={1,2,3,4,5,6,7,8,9,10,11,12}]
				\addplot+[ycomb, OliveGreen, mark=star, mark options={OliveGreen, scale=2}] plot coordinates
				{(3,0.3014967904608072)(7,0.2060387544136121)(9,0.15801317797015244)};
				\addplot+[ycomb, Dandelion, mark=triangle*, mark options={Dandelion}] plot coordinates
				{(1,0.024526497051189106)(2,0.04366863263065131)(4,0.0042804878297696145)(5,0.07255087425909913)(6,0.05564782523239283)(8,0.022768691792812035)(10,0.03078022703794662)(11,0.012346199727699693)(12,0.06788184159386762)};
				\addplot[red,sharp plot,update limits=false, dashed, thick] 
				coordinates {(1,0.1) (12,0.1)};
			\end{axis}
		\end{tikzpicture}
	} \hfill \subfloat[Global sensitivity indices $\gsi{\u}{S(\X,I(U(12,2)))f}$ for $\u \in U(12,2)$.]{\begin{tikzpicture}[scale=0.75]
		\begin{axis}[ylabel={$\gsi{\u}{S(\X_{\mathrm{train}},I(U(12,2)))f}$},tick label style={/pgf/number format/fixed}]
			\addplot+[ycomb, Dandelion, mark=triangle*, mark options={Dandelion}] plot coordinates
			{(1,0.12446812611216862)(2,0.11455039703620026)(3,0.10958515864195462)(4,0.08057249244795127)(5,0.05082088238095818)(6,0.046235974518989996)(7,0.04430044055852162)(8,0.035602854006107416)(9,0.026384504169043678)(10,0.026326950230226766)(11,0.025049427183250567)(12,0.023564160696417334)(13,0.02117107818762821)(14,0.018841884825971423)(15,0.01815320337027364)(16,0.017719088086606022)(17,0.01662379556996318)(18,0.015366621551640777)(19,0.014781446040523555)(20,0.01292712893807358)(21,0.009441103780675074)(22,0.00934915259936956)(23,0.008901355584038042)(24,0.008683920281175256)(25,0.007947204486423928)(26,0.0077077111821961565)(27,0.006336796362956039)(28,0.005971104283419384)(29,0.005754969748027635)(30,0.005362504560669756)(31,0.005090285995380257)(32,0.00474668869029838)(33,0.00452554291197292)(34,0.004422731266272496)(35,0.0044092099917748505)(36,0.004300130344967892)(37,0.004232630738408847)(38,0.004122083581457995)(39,0.004093371835244823)(40,0.003876128508515062)(41,0.0036055080553272652)(42,0.0034496293492738196)(43,0.0032737140204153617)(44,0.0029774187989129726)(45,0.0023893093146402826)(46,0.0023853308829641035)(47,0.0021827786575713275)(48,0.0021107634640473946)(49,0.001614432309471334)(50,0.0015175167429326807)(51,0.0014969630231066512)(52,0.0012940240587672798)(53,0.0010528502387083943)(54,0.0010135997356330775)(55,0.0008849431920411826)(56,0.0007220200173492534)(57,0.0007123720259231344)(58,0.0006191069743309094)(59,0.000548100276398919)(60,0.0004989151220978082)(61,0.0004924689845466057)(62,0.00048242571523533903)(63,0.0004520768504845259)(64,0.00037269179552028975)(65,0.00025634152118808677)(66,0.00024636741519040294)(67,0.00024209006630711343)(68,0.00022453431638615198)(69,0.00019208761330578806)(70,0.000191447038985034)(71,8.233609919945272e-5)(72,3.2031496740415625e-5)(73,2.5879209742187184e-5)(74,1.6463398428513013e-5)(75,8.64991114778432e-6)(76,7.540403112896306e-6)(77,4.9070205189082945e-6)(78,1.2359833248271945e-7)};
		\end{axis}
\end{tikzpicture}}
	\caption{Explainable approximation results with all twelve attributes using $N_1 = N_2 = 2$ and $\lambda = \e^8$.}\label{fig:argsi}
\end{figure}

The attributes 3, 7, and 9 are clearly the most important. They represent the month of the year (3), the DC code of the FWI (7) and the outside temperature (9). Using only these three attributes and superposition threshold $d_s = 2$, we computed an approximation with $N_1 = 2$, $N_2 = 10$, and $\lambda = \e^8$. The resulting model yielded a MAD of $12.64$ and a RMSE of $45.57$ with 30 times of 10-fold cross validation as before. In summary, we know that the most important information of our problem is contained in only three attributes and we also obtained a better performing model using only those three attributes.

\section*{Acknowledgments}
The authors thank their colleagues in the research group SAlE for valuable discussions on the contents of this paper. Daniel Potts acknowledges funding by Deutsche Forschungsgemeinschaft (German Research Foundation) -- Project--ID 416228727 -- SFB 1410. Michael Schmischke is supported by the German Federal Ministry of Education and Research grant 01$|$S20053A. 

\bibliographystyle{abbrv}
\bibliography{refs} 

\end{document}